\newcommand{\method}{\textsc{UaG}\xspace}
\theoremstyle{definition}
\newtheorem{definition}{Definition}
\newtheorem{theorem}{Theorem}
\newtheorem{remark}{Remark}
\title{Towards Trustworthy Knowledge Graph Reasoning:\\
An Uncertainty Aware Perspective}
\author{
    Bo Ni\textsuperscript{\rm 1}, 
    Yu Wang\textsuperscript{\rm 2},
    Lu Cheng\textsuperscript{\rm 3},
    Erik Blasch\textsuperscript{\rm 4}, 
    Tyler Derr\textsuperscript{\rm 1}
}
\begin{document}

\maketitle

\begin{abstract}
Recently, Knowledge Graphs (KGs) have been successfully coupled with Large
Language Models (LLMs) to mitigate their hallucinations and enhance their
reasoning capability, e.g., KG-based retrieval-augmented framework. However, current KG-LLM frameworks lack rigorous
uncertainty estimation, limiting their reliable deployment in high-stakes applications
Directly incorporating uncertainty quantification into KG-LLM frameworks presents challenges due to their 
complex architectures and the intricate interactions between the knowledge graph and language model components. To address this crucial gap,
we propose a new trustworthy KG-LLM framework, \method(\textbf{U}ncertainty \textbf{A}ware Knowledge-\textbf{G}raph Reasoning), which incorporates uncertainty quantification into the KG-LLM framework. We design an
uncertainty-aware multi-step reasoning framework that leverages conformal prediction to
provide a theoretical guarantee on the prediction set. To manage the error rate of the multi-step process, we additionally introduce an error rate control module to adjust the error rate within the individual components. Extensive experiments show that \method can achieve any pre-defined coverage rate while reducing the prediction set/interval size by 40\% on average over the baselines. Our code is available at: \textcolor{blue}{\url{https://github.com/Arstanley/UAG}}
\end{abstract}

%

\section{Introduction}
Large Language Models (LLMs) have recently achieved impressive performance in question-answering tasks due to their unprecedented capability of understanding complex linguistic patterns and generating coherent responses~\cite{openai2024gpt4, huang2022reasoning}. However, LLM's frequent hallucination~\cite{huang2023survey, mundler2024self} and lack of reasoning capability~\cite{lin2021truthfulqa, fu2023mirage} still limit their practical usage when facing domain-specific or complex questions. 

To address the challenges, recently knowledge graphs (KGs) have been coupled with LLMs
to provide factual data and contextual grounding, enhancing the reliability and accuracy of their responses. Knowledge graphs, as a form of structural knowledge representation consisting of factual triplets, provide two unique advantages to overcome the aforementioned challenges of LLMs. First, the additional information retrieved from KGs is directly extracted from reliable external sources, ensuring the currency and faithfulness
of the additional context upon which LLMs generate answers.
Second, KGs are typically represented as triplets where entities are connected by heterogeneous relations, providing explicit logic to enable reasoning and fulfilling complex tasks. To effectively leverage the above two advantages of KGs to enhance LLMs, two representative methods have been developed: 
LLM-based Structural Query and KG-based Retrieval-augmented Generation (KG-RAG). Structural Query relies on LLMs to generate structural queries such as SPARQL~\cite{structgpt} or traverse the KG via beam search~\cite{sun2023thinkongraph, wang2024knowledge}. KG-RAG works by deriving factually correct and context-relevant content from KG to augment the generation of LLMs~\cite{luo2024rog, cok, skg, knowledgeaugmented, yang2023chatgpt, wang2023boosting}. 

\begin{figure}[t]
    \centering
    \includegraphics[width=0.8\columnwidth]{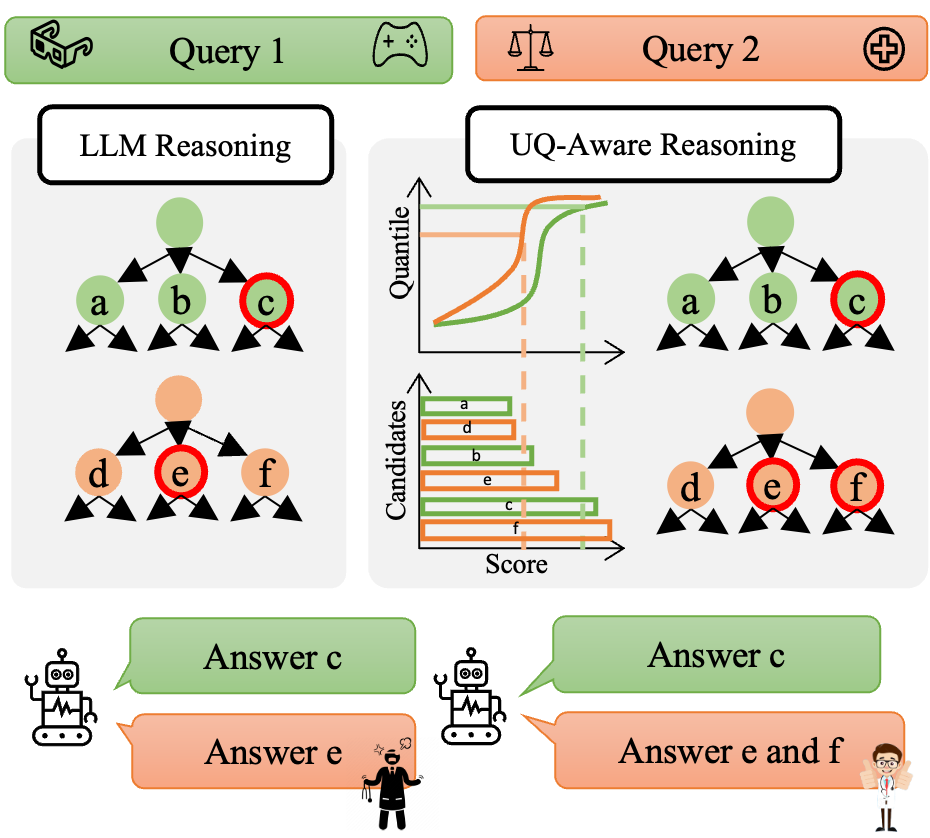} 
    \vskip -1.5ex
    \caption{An illustration of uncertainty quantification in the context of knowledge graph question answering. By quantifying a confidence boundary (dashlines), UQ methods can ensure the accuracy when facing uncertainties.}
    \label{fig:motivation}
     \vskip -2ex
\end{figure}

Despite KG-RAG and Structural Queries' effectiveness in various real-world applications, there
is no systematic investigation into the uncertainty quantification (UQ) aspect
of KG-LLM systems, and therefore, their deployment remains limited in high-stakes
scenarios (e.g., medical analysis, financial decision-making, etc.) where the
cost-of-errors is significant. As illustrated in
Figure \ref{fig:motivation}, without proper uncertainty quantification, existing
approaches fail to generate answers that accurately reflect output confidence.
Recently, several methods have been proposed to model uncertainty for LLMs~\cite{ye2024benchmarking, lin2023generating}. 

 One popular technique for uncertainty quantification is conformal prediction (CP)~\cite{angelopoulos2021gentle}. Given a user-specified error rate $\alpha$, CP produces a prediction set that guarantees a coverage rate $1-\alpha$ by calibrating the model prediction on a hold-out calibration set. Despite the model agnostic, distribution free characteristics of CP, directly applying it to LLMs is challenging because of the unbounded output space. To address it, ~\citeauthor{ye2024benchmarking} and ~\citeauthor{kumar2023conformal} used the logits generated by the contemporary causal language models as the prediction probability; \citeauthor{su2024api} and \citeauthor{lin2023generating} considered the commercial black-box language models and sampled the responses to approximate the confidence; Most recently, \citeauthor{quach2023conformal} extended the general risk control framework to enable conformal prediction in large language models~\cite{quach2023conformal}. 
 
Despite the numerous methods that have utilized conformal prediction (CP) to model the uncertainty of large language models (LLMs), no previous work has focused on equipping knowledge graph-language models (KG-LLMs) with uncertainty quantification. This task presents several unique challenges. First, the complexity of knowledge graphs (KGs), characterized by multiple hops connecting queries to solutions, makes it difficult to accurately calibrate the graph reasoning process. Second, the high-dimensional output distribution of the underlying language models further complicates the calibration process. In KG-LLM applications, there are often multiple valid solutions to a single query, making the coverage of these solutions critically important. Thus, the uncontrolled high-dimensional output poses a significant challenge because it complicates the process of identifying and validating the correct solutions among many possibilities. This lack of control can lead to the propagation of errors and reduced reliability in the model's predictions. To address these challenges, novel techniques specifically tailored for knowledge graph question answering (KGQA) need to be developed.

Therefore, to fill this gap, in this paper, we introduce \method(\textbf{U}ncertainty \textbf{A}ware Knowledge-\textbf{G}raph Reasoning), a novel uncertainty-aware knowledge graph reasoning framework that takes advantage of both the structural knowledge representation and the reasoning capability of LLM. Specifically, instead of relying on LLM to
generate the answers from the open domain, we retrieve an initial set of
uncertainty aware answers through beam searching the knowledge graph. We guide
the beam search process with conformal prediction to achieve the theoretically guaranteed
coverage rate. 
In order to facilitate more faithful reasoning in LLM, we also
retrieve reasoning paths through a \textit{planning-retrieval} module~\cite{luo2024rog}. Then, we leverage the LLM to generate the answer with the retrieved reasoning paths. To combine the power of generation and the fidelity of the retrieved candidates, we additionally calibrate the similarity measurements between the retrieved candidates and the generated answers. One challenge of the multi-step framework is the propagation of the error rates. In order to adjust the error rate aggregation across multiple components, we leverage the Learn Then Test (LTT) framework~\cite{angelopoulos2021learn} to control the error rate in the individual components.   

We perform experiments on two widely
used multi-hop knowledge graph QA datasets and demonstrate that \method is able to
satisfy the uncertainty constraint while maintaining a reasonable size of
prediction.

In summary, our contributions are summarized as follows: 
\begin{itemize}
    \item We propose a novel framework, \method, for uncertainty quantification with LLM-based KGQA, addressing a significant gap in the existing literature.
    \item We extend the \textit{learn-then-test} paradigm to KGQA by modeling the distribution of the error propagation, improving the reliability of the model outputs.
    \item Extensive experiments demonstrate the effectiveness of our framework on two multi-hop knowledge graph QA benchmarks, outperforming baseline methods in terms of both coverage and robustness under uncertainty. 
\end{itemize}

The rest of the paper is organized as follows: Section \ref{section:prelim} provides the preliminaries; Section \ref{section:method} introduces our \method in detail; experiments and ablation studies in Sections \ref{section:experiment} and \ref{section:ablation}; related work in Section \ref{section:related}; lastly, conclusions and future work are presented in Sections \ref{section:conclusion}.

\section{Preliminiaries}
\label{section:prelim}
\subsection{Conformal Prediction} \label{sec-cp}

Conformal prediction (CP) is a distribution-free model-agnostic approach for uncertainty quantification. By calibrating the model prediction on a held-out calibration set, CP produces sets of predicted intervals that
contain the ground-truth labels with a user-specified error rate
$\alpha$~\cite{angelopoulos2021gentle, shafer2008tutorial}. 

Without loss of generality, considering the $i^{\text{th}}$ sample $(x_i, y_i)$ with $x_i$ being the input feature and $y_i$ being the corresponding ground-truth output, we denote the calibration set as $\mathcal{D}^{\text{cal}} = \{(x_i, y_i)\}_{i = 1}^{n}$. Conformal Prediction (CP) guarantees the error rate of the prediction set with the following steps.
\begin{itemize}[leftmargin=*]
    \item \textbf{Define non-conformal score:} A heuristic-based uncertainty estimation function $S: \mathcal{X}\times \mathcal{Y} \rightarrow \mathbb{R}$ is defined to calculate the non-conformal score $s_i=S(x_i, y_i)$, which measures the uncertainty of the prediction $y_i$ given the input $x_i$ with larger score indicating worse agreement between $x_i$ and $y_i$. 
    \item \textbf{Compute the quantile:} CP then calculates the non-conformal score for every input-output pair in the calibration set to obtain $\mathcal{S}^{\text{cal}} = \{s_i\}_{i = 1}^{n}$ and further compute the conformal score $q_{\alpha}^{S, \mathcal{D}_{\text{cal}}}$ as the $\frac{\lceil(n+1)(1-\alpha)\rceil}{n}$ quantile of the calibration scores $\mathcal{S}^{\text{cal}}$:
    \begin{equation}
        q_{\alpha}^{S, \mathcal{D}_{\text{cal}}} = \text{Quant}(\{S(x, y)|(x, y)\in \mathcal{D}_{\text{cal}}\}, \frac{\lceil(n+1)(1-\alpha)\rceil}{n})
    \end{equation}
    \item \textbf{Prediction set:} the final prediction set satisfying the $\alpha$ error rate with respect to the calibration set $\mathcal{D}_{\text{cal}}$ would be $C(X_\text{test}) = \{y|y \in \mathcal{Y}, S(X_{\text{test}}, y) \leq q_{\alpha}^{\mathcal{S}, \mathcal{D}_{\text{cal}}}$\}.
\end{itemize}

\subsection{Learn Then Test (LTT)} \label{section:ltt}
Recently, the Learn Then Test (LTT) framework~\cite{angelopoulos2021learn} was introduced that extends conformal prediction to manage the expectation of any loss functions. 
Specifically, it is achieved by approaching the hyper-parameter selection as a multiple-hypothesis testing problem. Next, we formally introduce the LTT framework. 

Formally, let $L_\lambda: \mathcal{Y} \times \hat{\mathcal{Y}} \rightarrow \mathbb{R}$ be any loss function with a hyperparameter configuration $\lambda \in \Lambda$. Notably, $\lambda$ could be multi-dimensional. Let $\alpha \in \mathbb{R}$ be the user-defined error rate for $L_\lambda$ (i.e., $\mathbb{E}(L_\lambda) \leq \alpha$). 
Using calibration set $\mathcal{D}^{\text{cal}}$, LTT computes a set of valid configurations $\Lambda_{\text{valid}} \in \Lambda$ satisfying 
\begin{equation}
\label{eq:ltt}
    \mathbb{P}\bigg( \underset{\lambda \in \Lambda_{\text{valid}}}{\text{sup}} \mathbb{E}[L_\lambda | \mathcal{D}_{\text{cal}}]\leq \alpha \bigg) \geq 1 - \delta
\end{equation}
where $\delta$ represents the desired confidence level on the selection of configurations. Intuitively, $\delta$ is the probability that the valid configurations we identify will truly meet our error guarantee. To calibrate the configurations on $\mathcal{D}_{\text{cal}}$, we define the null hypothesis $H_0^\lambda: \mathbb{E}[L_\lambda] > \alpha$ for each $\lambda \in \Lambda$ and calculate a super-uniform p-value $p_\lambda$ using concentration inequalities. Then we can leverage any family-wise error rate (FWER) controlling algorithms to identify the non-rejected configurations $\Lambda_{\text{valid}}$.

\begin{theorem}[Learn Then Test~\cite{angelopoulos2021learn}] 
Suppose $p_\lambda$ is super-uniform under $H_0^\lambda \forall \lambda \in \Lambda$. Let $\mathcal{T}$ be any FWER-controlling algorithm at level $\delta$. Then $\Lambda_{valid}$ satisfies Eq.~\eqref{eq:ltt}. 
\end{theorem}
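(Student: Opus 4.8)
The plan is to reduce the statement to the defining property of a family-wise error rate (FWER) controlling procedure, so that essentially all of the work is a careful bookkeeping of events rather than any concentration argument. First I would isolate the set of configurations that genuinely violate the target risk,
\[
  \Lambda_{\text{null}} := \{\lambda \in \Lambda : \mathbb{E}[L_\lambda] > \alpha\},
\]
which is precisely the collection of $\lambda$ for which the null $H_0^\lambda$ holds. In the standard setting $\mathbb{E}[L_\lambda]$ is a fixed population quantity determined by $\lambda$, so $\Lambda_{\text{null}}$ is deterministic; the only randomness driving Eq.~\eqref{eq:ltt} enters through the calibration draw $\mathcal{D}_{\text{cal}}$, hence through the p-values $p_\lambda$ and the resulting output $\Lambda_{\text{valid}} = \mathcal{T}(\{p_\lambda\}_{\lambda \in \Lambda})$.

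The crucial step is an exact characterization of the failure event. Since $\Lambda_{\text{valid}}$ is the set of configurations that $\mathcal{T}$ certifies as valid, i.e.\ for which $H_0^\lambda$ is rejected, the supremum in Eq.~\eqref{eq:ltt} exceeds $\alpha$ if and only if $\Lambda_{\text{valid}}$ retains at least one configuration whose true risk is larger than $\alpha$. I would therefore establish the set identity
\[
  \Big\{ \sup_{\lambda \in \Lambda_{\text{valid}}} \mathbb{E}[L_\lambda] > \alpha \Big\}
  \;=\; \big\{ \Lambda_{\text{valid}} \cap \Lambda_{\text{null}} \neq \emptyset \big\}.
\]
The right-hand event says that $\mathcal{T}$ certifies at least one configuration for which $H_0^\lambda$ is genuinely true, i.e.\ $\mathcal{T}$ commits at least one false rejection. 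This reframing is the heart of the argument: the coverage guarantee is violated exactly when the multiple-testing procedure makes a Type-I error.

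It then remains to bound the probability of this false-rejection event, and this is where the super-uniformity hypothesis is used: $p_\lambda$ being super-uniform under $H_0^\lambda$ is exactly the validity condition that the FWER-controlling algorithm $\mathcal{T}$ requires of its inputs. Invoking the assumed level-$\delta$ guarantee of $\mathcal{T}$ gives
\[
  \mathbb{P}\big( \Lambda_{\text{valid}} \cap \Lambda_{\text{null}} \neq \emptyset \big)
  = \mathbb{P}(\text{at least one true null is rejected}) \leq \delta,
\]
and combining this with the set identity and passing to complements yields
\[
  \mathbb{P}\Big( \sup_{\lambda \in \Lambda_{\text{valid}}} \mathbb{E}[L_\lambda \mid \mathcal{D}_{\text{cal}}] \leq \alpha \Big) \geq 1 - \delta,
\]
which is precisely Eq.~\eqref{eq:ltt}.

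I do not expect any heavy computation; the main obstacle is conceptual rather than technical. The step I would be most careful about is the set identity in the second paragraph: I must track the directional convention of the null $H_0^\lambda : \mathbb{E}[L_\lambda] > \alpha$ so that appearing in $\Lambda_{\text{valid}}$ corresponds to \emph{rejecting} $H_0^\lambda$, ensuring that a true null surviving in $\Lambda_{\text{valid}}$ is counted as a false rejection rather than a harmless failure to reject. Getting this correspondence exactly right is what licenses the otherwise immediate appeal to the FWER guarantee.
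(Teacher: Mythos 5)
Your proof is correct, and it is essentially the canonical argument: identifying the risk-control failure event $\{\sup_{\lambda \in \Lambda_{\text{valid}}} \mathbb{E}[L_\lambda] > \alpha\}$ with the Type-I error event $\{\Lambda_{\text{valid}} \cap \Lambda_{\text{null}} \neq \emptyset\}$ and then invoking the level-$\delta$ FWER guarantee (which is licensed precisely by the super-uniformity of the $p_\lambda$) is exactly the proof given in the original Learn Then Test paper of \citeauthor{angelopoulos2021learn}. Note that the paper under review does not prove this theorem at all --- it imports it by citation and only proves its own Theorem~\ref{thm:guarentee} using LTT as a black box --- so there is no in-paper proof to diverge from; your argument matches the cited source, with the only cosmetic omission being the trivial convention that an empty $\Lambda_{\text{valid}}$ makes the supremum (hence the guarantee) vacuous.
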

\subsection{Problem Definition} Given a knowledge graph
$\mathcal{G}=(\mathcal{E}, \mathcal{R})$, where $\mathcal{E}$ is the set of entities and $\mathcal{R}$ is the set of relations. The edges in the knowledge graph are represented as triplets (i.e., $(e_s, r, e_o) \in \mathcal{G}$) with $e_s \in \mathcal{E}$ being the head entity, $r \in \mathcal{R}$ being the relation, and $e_o \in \mathcal{E}$ being the tail entity. For the rest of this paper, without further specification, we denote the calligraphic font as sets (such as $\mathcal{X}, \mathcal{Y}$) and capital letters as functions (such as $S$, $F$, for score functions and loss functions, respectively).  

\begin{definition}[Uncertainty quantification for multi-hop knowledge graph question answering] Given a question $q$ in natural language with the ground-truth answer set being $\mathcal{Y}_{\text{test}}$, assuming the user specified error rates as $\alpha$ and $\delta$, the task here is to derive a algorithm $C_{\lambda}$ hyper-parametrized by $\lambda$, which takes the input $X_{\text{test}}$ and predicts the output set $\hat{\mathcal{Y}}_{\text{test}}$ that satisfies:
\begin{equation}\label{eq:problem} 
\mathbb{P}\bigg(\mathbb{P}\left(\hat{e} \in \mathcal{Y}_\text{test}, \forall \hat{e} \in C_{\lambda}(X_{\text{test}}) \mid \mathcal{D}_{\text{cal}}\right) \geq 1-\alpha\bigg) \geq 1-\delta     
\end{equation}

\end{definition}

\begin{remark} The error rate $\delta$ controls the probability that the inner probabilistic guarantee holds. Specifically, it ensures that with at least $(1 - \delta)$ confidence, the prediction set $C_{\lambda}(X_{\text{test}})$ will contain the correct answers with a probability of at least $(1 - \alpha)$.

    \textbf{Intuition:} Consider $\alpha$ as a measure of how tolerant we are to errors in individual predictions. The parameter $\delta$, on the other hand,
    controls our overall confidence in this process. 
\end{remark}

\section{Method}
\label{section:method}
\begin{figure}[t]
    \centering
    \includegraphics[width=0.95\columnwidth]{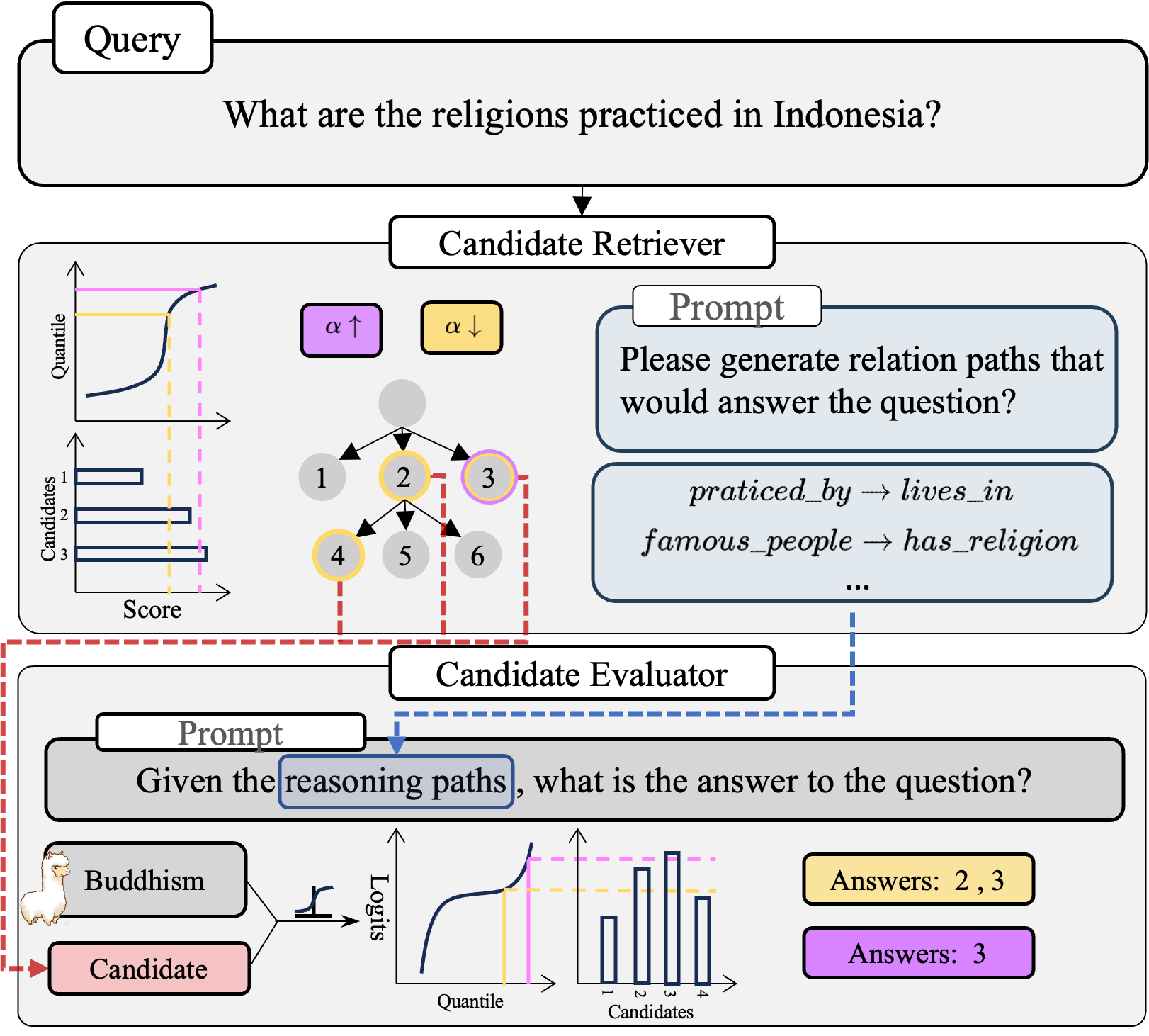}
    \caption{An illustration of our \method framework.}
    \label{fig:framework}
\end{figure}

\method is a multi-hop knowledge graph reasoning framework that incorporates
uncertainty quantification (UQ) into its reasoning process. It consists of three
components: 
UQ-aware
candidate retriever, UQ-aware candidate evaluator, and Global Error Rate
Controller, as shown in Figure~\ref{fig:framework}. Given a
user-specified query, the \textbf{UQ-aware Candidate Retriever} selectively
retrieves neighboring nodes and formulates reasoning paths to ensure a
pre-defined error rate. The \textbf{UQ-aware Candidate Evaluator} then reasons over the retrieved candidates and reasoning paths to produce the final answer set. Subsequently, to avoid overly conservative predictions, the
\textbf{Global Error Rate Controller} adjusts individual error rates by
calibrating them using the LTT framework introduced in 
the previous section.

\subsection{UQ-aware Candidate Retriever}
The first component of our \method framework is the candidate retriever, which retrieves candidate entities from the knowledge graph based on the question or current paths. Previous research shows that multi-hop graph traversal frameworks improve LLM reasoning and answer faithfulness~\cite{wang2024knowledge, sun2023thinkongraph}.  However, these frameworks typically use heuristics like Top-$K$ candidate selection based on textual similarity, which lack a theoretical basis and cannot ensure a consistent error rate, making them unreliable for high-stakes scenarios. To address this, we incorporate uncertainty quantification into graph traversal, replacing heuristic selection with error-bounded selection. Specifically, we apply conform prediction to two steps: retrieving candidate paths and candidate neighbors.

\subsubsection{Retrieving Candidate Path}

Formally, for each traversed path currently ending at node $v$, we update the breadth-first-search tree queue by following:
\vspace{-1ex}
\begin{equation}
    \label{eq:visiting}
    \Bigl\{s ~\big|~ s \in \mathcal{N}(v), ~S_1\Bigl(Q||(||_{i=0}^{j-1} r_i), r_j\Bigr) < q_{\alpha_1}^{S_1, \mathcal{D}_{\text{cal}}}\Bigr\}
\end{equation}
where \(\mathcal{N}(v)\) is the neighborhood set of $v$, $Q||(||_{i = 0}^{j - 1}r_i)$ are the concatenated relations traversed so far in the path with starting passages being $Q$, $r_j$ is the current relation between the entities $v$ and $s$, and $S_1$ is the uncertainty estimation score for the candidate retriever. In our context, we choose $S_1$ to be the textual similarity between two textual passages.

\subsubsection{Retrieving Candidate Neighbor}
While traversing the graph, we also want to determine if the current node should be added into the candidate set. As a result, we add the following set to the prediction set while visiting node $v$:
\vspace{-1ex}
\begin{equation}
    \label{eq:ans}
    \Bigl\{s ~\big| s \in \mathcal{N}(v), S_1\Bigl(Q, ||_{i=0}^{j} r_i\Bigr)<q_{\alpha_2}^{S_1, \mathcal{D}_{\text{cal}}} \Bigr\}
\end{equation}

Eq.~\eqref{eq:visiting} differs from Eq.~\eqref{eq:ans} in their focus during
the traversal process. Eq.~\eqref{eq:visiting} focuses on identifying the likely answer paths by assessing the similarity between the traversed path and the current relation. In contrast, Eq.~\eqref{eq:ans} evaluates if the entire path up to the current relation sufficiently answers the query by comparing it to the initial query. This distinction allows for dynamic adjustment of the traversal process and candidate set based on the evolving query context and encountered relations.

\subsection{UQ-aware Candidate Evaluator}
The second component of \method is an uncertainty-aware candidate evaluator based on large language models (LLMs). Directly prompting the language model has demonstrated significant advantages in knowledge graph question answering (KGQA) tasks, particularly due to its ability to generate diverse and contextually relevant answers. However, this open-ended generation approach lacks any theoretical guarantees on the correctness of the answers, which can lead to unreliable outputs. Conversely, the retrieved candidates are constrained by a predefined error rate, ensuring the reliability of the prediction set. However, this constraint often results in overestimating the prediction set size, encompassing a broader range of potential answers than necessary.

Thus, to harness the strengths of both approaches, we introduce a calibration process that optimizes the balance between them. By aligning the similarity between the LLM-generated candidates and the retrieved set, we aim to refine the prediction set size while maintaining a bounded error rate. This ensures that the final output is not only theoretically sound but also practically effective.

To calculate the final answer set, we define the non-conformal score for the
evaluator by taking the similarity between the retrieved answers and the generated answers. Formally, given a set of candidates $\mathcal{C}$ and retrieved
reasoning paths $\mathcal{P}$, let $\Phi$ be the LLM generation function. Then, the final answer set is defined as follows:

\begin{equation}
    \label{eq:llm}
    \Bigl\{ a \in \mathcal{C} \mid S_1\big(a, \Phi(\mathcal{P})\big) < q_{\alpha_{\text{llm}}}^{S_1, \mathcal{D}_{\text{cal}}} \Bigr\}
\end{equation}

\subsection{Global Error Rate Controller}
Since \method involves multiple components, directly applying the user-defined error rate $\alpha$ to Eqs.~\eqref{eq:visiting}-\eqref{eq:llm} may not achieve the desired error rate. Errors from the knowledge graph traversal can propagate to the inference stage, potentially exceeding the user-specified tolerance unless errors across components are perfectly correlated.

To address this issue, we leverage the LTT framework (Section 2)
to find the best error rate for each component by treating their error rates as hyper parameters. Let $\lambda = (\alpha_1, \alpha_2, \alpha_3) \in (0, 1]^3$ be the individual error rates for each component. The finite search space \(\Lambda\) can be represented as the Cartesian product of these sets for \(\alpha_1\), \(\alpha_2\), and \(\alpha_3\):
\[
\Lambda = \{(\alpha_1, \alpha_2, \alpha_3) \mid \alpha_1, \alpha_2, \alpha_3 \in \{h, 2h, \ldots, 1\} \}
\]
where $h$ is the hyper parameter to control the size of the search space, bounded by 0 and 1. For each $\lambda \in \Lambda$, we compute a valid p-value $p_\lambda$ for the null hypothesis $H_0^\lambda: \mathbb{E}[L_\lambda] > \alpha$ by computing the concentration inequality on the calibration dataset $D_{cal}$.

\begin{theorem}[Binomial tail bound p-values~\cite{quach2023conformal}]
Let $\text{Binom}(n, \alpha)$ denote a binomial random variable with sample size $n$ and success probability $\alpha$. Then 
$
    p_\lambda = \mathbb{P}(\text{Binom}(n, \alpha) \leq 
    \sum_{D_{cal}} L_\lambda
$ 
is a valid p-value for $H_0^\lambda: \mathbb{E}[L_\lambda] > \alpha$. 
\end{theorem}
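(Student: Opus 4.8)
The plan is to establish that $p_\lambda$ is \emph{super-uniform} under the null, i.e.\ that $\mathbb{P}(p_\lambda \le u) \le u$ for every $u \in [0,1]$ whenever $H_0^\lambda$ holds; this is precisely the hypothesis required by the Learn Then Test theorem stated earlier. Throughout I would treat the per-example losses as i.i.d.\ Bernoulli variables, which is what makes the binomial tail meaningful: writing $\mu := \mathbb{E}[L_\lambda]$, the calibration statistic $X := \sum_{\mathcal{D}_{\text{cal}}} L_\lambda$ is then distributed as $\text{Binom}(n,\mu)$, while the $\text{Binom}(n,\alpha)$ appearing in the statement is an \emph{independent} reference variable $B$ whose success probability is the null threshold $\alpha$. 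Letting $F_B(t) := \mathbb{P}(B \le t)$ be the CDF of $B$, the claimed p-value is simply $p_\lambda = F_B(X)$, so the whole task reduces to controlling the left tail of $F_B(X)$.

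First I would record the key reduction. Because $F_B$ is non-decreasing and, for $0<\alpha<1$, strictly increasing on the integer support $\{0,\dots,n\}$, the event $\{F_B(X) \le u\}$ coincides with $\{X \le k_u\}$, where $k_u := \max\{k : F_B(k) \le u\}$ (and the event is empty, giving probability $0 \le u$, in the degenerate regime $u < F_B(0)$). Hence $\mathbb{P}(p_\lambda \le u) = \mathbb{P}(X \le k_u)$, and it remains to bound this by $F_B(k_u)$, which is at most $u$ by the very definition of $k_u$.

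Second I would supply the stochastic-dominance step, which is where the null hypothesis actually enters. Under $H_0^\lambda$ we have $\mu > \alpha$, and a standard coupling---representing each Bernoulli draw as $\mathbf{1}[U_i \le p]$ for a shared uniform $U_i$ and raising $p$ from $\alpha$ to $\mu$---shows $\text{Binom}(n,\mu) \succeq_{\mathrm{st}} \text{Binom}(n,\alpha)$, i.e.\ $\mathbb{P}(X \le k) \le \mathbb{P}(B \le k)$ for every integer $k$. Applying this at $k = k_u$ gives $\mathbb{P}(X \le k_u) \le F_B(k_u) \le u$, closing the chain $\mathbb{P}(p_\lambda \le u) \le u$. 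The boundary case $\mu = \alpha$ is the tightest one, since there $X$ and $B$ are identically distributed and the bound degenerates to the ordinary probability-integral-transform inequality for a discrete CDF.

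I expect the main obstacle to be handling the discreteness of the binomial cleanly: since $F_B$ is a step function, the level set $\{F_B(X)\le u\}$ must be rewritten as the integer-threshold event $\{X \le k_u\}$ rather than inverted continuously, and the regime $u < F_B(0)$ must be checked separately. The one modeling assumption worth flagging is that the loss is binary, $L_\lambda \in \{0,1\}$, which is exactly what makes $X$ binomial; were $L_\lambda$ merely $[0,1]$-valued, the identical conclusion would instead require replacing the exact binomial tail with a Hoeffding- or Bentkus-type concentration bound.
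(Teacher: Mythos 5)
Your proof is correct, but note that the paper itself never proves this statement: Theorem 2 is imported as a citation from Quach et al.'s conformal language modeling work and is used as a black box inside the Global Error Rate Controller, so there is no in-paper argument to compare against. What you have written is essentially the standard proof underlying the cited result. Your chain of reasoning is sound at every step: (i) identifying ``valid p-value'' with super-uniformity, which is exactly the hypothesis the paper's Theorem 1 (LTT) needs; (ii) handling the discreteness of the binomial CDF by rewriting $\{F_B(X)\le u\}$ as the integer-threshold event $\{X\le k_u\}$, with the degenerate regime $u<F_B(0)$ checked separately; and (iii) obtaining $\mathbb{P}(X\le k_u)\le F_B(k_u)\le u$ from first-order stochastic dominance of $\mathrm{Binom}(n,\mu)$ over $\mathrm{Binom}(n,\alpha)$ when $\mu>\alpha$, via the shared-uniform coupling. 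Your flagged modeling assumption that the loss is binary is also exactly consistent with how the paper instantiates it: in Theorem 3 the loss is the indicator $\mathbb{1}\{\nexists\, \hat{e}\in C_\lambda(X)\colon \hat{e}\in\mathcal{E}_{ans}\}$, so the i.i.d.\ Bernoulli model applies, and your remark that merely $[0,1]$-valued losses would instead require Hoeffding-- or Bentkus--type p-values matches the treatment in the LTT/CLM literature. Two cosmetic points: strict monotonicity of $F_B$ on the support is not actually needed (downward-closedness of the sublevel set $\{k: F_B(k)\le u\}$ suffices, which follows from monotonicity alone), and since the null is the strict inequality $\mathbb{E}[L_\lambda]>\alpha$, the boundary case $\mu=\alpha$ you discuss lies outside the null and serves only as the limiting worst case.
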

In our context, we define the loss function $F$ as the error rate in the
system. Let $\mathcal{T}$ be a FWER-controlling algorithm\footnote{FWER-controlling algorithms take a set of null hypothesis and ensures the type-I error rate is controlled by a pre-defined value $\delta$.} $\mathcal{T}:
(\mathcal{P}, \mathbb{R}) \rightarrow \mathcal{P}'$ where $\mathcal{P}$ denotes
a family of p-values that we want to control based on a given error rate. Then
we can identify a set of configurations $\Lambda_{valid} \subseteq \Lambda$ that
satisfies Eq.~\eqref{eq:ltt} by taking $\mathcal{T}(\{p_\lambda \mid \lambda \in \Lambda\}, \delta)$ as $\Lambda_{valid}$.
\begin{theorem}[Coverage Guarantee of \method] 
\label{thm:guarentee}
If $\lambda \in \Lambda_{valid}$, then Eq. \eqref{eq:problem} is satisfied.
\end{theorem}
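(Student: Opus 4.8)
The plan is to reduce the nested coverage guarantee of Eq.~\eqref{eq:problem} to a direct application of the Learn Then Test framework (Theorem~1) together with the validity of the binomial tail $p$-values (Theorem~2). The first step is to make the loss function $L_\lambda$ explicit: I would take it to be the indicator of a miscoverage event for the assembled pipeline under configuration $\lambda = (\alpha_1, \alpha_2, \alpha_3)$, namely $L_\lambda = \mathbf{1}[\exists\, \hat{e} \in C_\lambda(X_{\text{test}}) : \hat{e} \notin \mathcal{Y}_{\text{test}}]$. With this choice the inner statement of Eq.~\eqref{eq:problem} is merely a rewriting of a bound on the conditional expectation of the loss, since $\mathbb{P}(\hat{e} \in \mathcal{Y}_{\text{test}},\, \forall \hat{e} \in C_\lambda(X_{\text{test}}) \mid \mathcal{D}_{\text{cal}}) = 1 - \mathbb{E}[L_\lambda \mid \mathcal{D}_{\text{cal}}]$; hence the event $\{\mathbb{E}[L_\lambda \mid \mathcal{D}_{\text{cal}}] \leq \alpha\}$ coincides with the event that the inner coverage is at least $1-\alpha$.

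Second, I would verify the hypotheses needed to invoke LTT. By Theorem~2 the binomial tail construction yields a valid, hence super-uniform, $p$-value $p_\lambda$ for each null hypothesis $H_0^\lambda : \mathbb{E}[L_\lambda] > \alpha$, because $L_\lambda \in \{0,1\}$ makes $\sum_{\mathcal{D}_{\text{cal}}} L_\lambda$ a binomial count under the null. Since $\Lambda_{\text{valid}} = \mathcal{T}(\{p_\lambda \mid \lambda \in \Lambda\}, \delta)$ is produced by an FWER-controlling procedure $\mathcal{T}$ at level $\delta$, Theorem~1 applies and guarantees that $\Lambda_{\text{valid}}$ satisfies Eq.~\eqref{eq:ltt}, i.e., $\mathbb{P}\bigl(\sup_{\lambda' \in \Lambda_{\text{valid}}} \mathbb{E}[L_{\lambda'} \mid \mathcal{D}_{\text{cal}}] \leq \alpha\bigr) \geq 1 - \delta$.

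Third, I would pass from the supremum statement of Eq.~\eqref{eq:ltt} to the single-configuration statement of Eq.~\eqref{eq:problem}. Assuming $\lambda \in \Lambda_{\text{valid}}$, the pointwise inequality $\mathbb{E}[L_\lambda \mid \mathcal{D}_{\text{cal}}] \leq \sup_{\lambda' \in \Lambda_{\text{valid}}} \mathbb{E}[L_{\lambda'} \mid \mathcal{D}_{\text{cal}}]$ holds for every realization of the calibration draw, so the event appearing in Eq.~\eqref{eq:ltt} is contained in the event $\{\mathbb{E}[L_\lambda \mid \mathcal{D}_{\text{cal}}] \leq \alpha\}$. Monotonicity of probability then yields $\mathbb{P}(\mathbb{E}[L_\lambda \mid \mathcal{D}_{\text{cal}}] \leq \alpha) \geq 1 - \delta$, which by the equivalence of the first step is exactly Eq.~\eqref{eq:problem}.

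The routine parts are the indicator rewriting and the final monotonicity step. The part that deserves the most care, and which I expect to be the main obstacle, is arguing that a single scalar loss $L_\lambda$ faithfully captures the error propagated through all three components of the pipeline --- the path retriever of Eq.~\eqref{eq:visiting}, the neighbor retriever of Eq.~\eqref{eq:ans}, and the LLM evaluator of Eq.~\eqref{eq:llm} --- so that controlling $\mathbb{E}[L_\lambda]$ genuinely controls end-to-end miscoverage rather than the error of any single stage. In particular I would need $C_\lambda$ to be a well-defined deterministic function of $(X_{\text{test}}, \lambda)$ given the calibrated quantiles, and I would need the exchangeability assumption underlying the binomial $p$-value to hold for the aggregated loss evaluated on $\mathcal{D}_{\text{cal}}$, so that the guarantee of Theorem~2 transfers to the composed system.
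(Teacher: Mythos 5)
Your overall route is the same as the paper's: define an indicator loss for the end-to-end pipeline, identify the inner statement of Eq.~\eqref{eq:problem} with the event $\{\mathbb{E}[L_\lambda \mid \mathcal{D}_{\text{cal}}] \leq \alpha\}$, invoke the LTT guarantee of Eq.~\eqref{eq:ltt} (with binomial-tail p-values supplying super-uniformity), and pass from the supremum over $\Lambda_{\text{valid}}$ to the fixed $\lambda$ by monotonicity. Those structural steps all match the paper's proof. Also, the worry you flag as the main obstacle---whether a single scalar loss captures error propagation through the three stages---is actually a non-issue for validity: LTT treats $C_\lambda$ as a black box, so as long as the loss is evaluated on the composed system's output for each calibration point, the end-to-end guarantee is automatic. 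The per-stage accounting matters only for \emph{efficiency}, i.e., it explains why $(\alpha_1,\alpha_2,\alpha_3)$ must be tuned jointly rather than each set to $\alpha$; it plays no role in the correctness of this theorem.

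The one substantive divergence is your choice of loss. You take $L_\lambda = \mathbb{1}\{\exists\, \hat{e} \in C_\lambda(X) : \hat{e} \notin \mathcal{Y}_{\text{test}}\}$ (the set contains \emph{some wrong} element), so that $1 - \mathbb{E}[L_\lambda \mid \mathcal{D}_{\text{cal}}]$ is the probability that \emph{every} returned element is correct---a precision-type guarantee. The paper instead uses $L_\lambda = \mathbb{1}\{\nexists\, \hat{e} \in C_\lambda(X) : \hat{e} \in \mathcal{E}_{ans}\}$ (the set contains \emph{no correct} element), so the controlled event is ``at least one returned answer is correct''---a coverage-type guarantee. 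These two events are not complements of each other, and they certify different properties: under your loss the empty set achieves zero loss, so the guarantee is vacuously satisfiable and the subsequent set-size minimization would collapse; under the paper's loss the empty set incurs loss one. Your reading is the literal one of Eq.~\eqref{eq:problem} as written (the ``$\forall$'' quantifier), and in fairness the ambiguity originates in the paper itself: its final proof step declares $\mathbb{P}(\nexists\, \hat{e}\in C_\lambda(X): \hat{e}\in\mathcal{E}_{ans} \mid \mathcal{D}_{\text{cal}}) \leq \alpha$ ``equivalent to'' the $\forall$-statement, which is not a literal equivalence, since the complement of ``none correct'' is ``at least one correct,'' not ``all correct.'' Still, the paper's proof, its ECR metric, and its case study (where extra incorrect answers are tolerated so long as the ground truths appear in the set) make clear that coverage, not precision, is the intended and actually calibrated semantics; the p-values in \method's pipeline are computed with the no-correct-answer loss, so your argument as written certifies a property the calibration does not test. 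The fix is small: keep every step of your argument but swap in the paper's loss, after which the inner rewriting becomes ``some element of $C_\lambda(X)$ lies in $\mathcal{Y}_{\text{test}}$ with probability at least $1-\alpha$.''
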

\begin{proof}
From Eq. \eqref{eq:ltt}, $\forall \lambda \in \Lambda_{valid}$,
\[
\mathbb{P}\left(E[L_\lambda \mid D_{\text{cal}}] \leq \alpha\right) \geq 1-\delta
\]
In the context of KGQA,  
\begin{align*}
    L_\lambda = \mathbb{1}\{\nexists \hat{e} \in C_\lambda(X): \hat{e} \in \mathcal{E}_{ans}\}
\end{align*}
Note that,
\begin{align*}
    &E[L_\lambda \mid D_{\text{cal}}] = E[\mathbb{1}\{\nexists \hat{e} \in C_\lambda(X)\text{: } \hat{e} \in \mathcal{E}_{ans}\} | D_{cal}] \\ 
    &= \mathbb{P}\left(\nexists \hat{e} \in C_\lambda(X)\text{: } \hat{e} \in \mathcal{E}_{ans} | D_{cal} \right)
\end{align*}
Substitute $E[L_\lambda \mid D_{\text{cal}}]$ in Eq. \eqref{eq:ltt}, then we have
\begin{align*}
    \mathbb{P}\left(\mathbb{P}\left(\nexists \hat{e} \in C_\lambda(X)\text{: } \hat{e} \in \mathcal{E}_{ans} | D_{cal} \right) \leq \alpha\right) \geq 1-\delta 
\end{align*}

This is equivalent to
\begin{equation}
    \mathbb{P}\left(
    \mathbb{P}\left(\hat{e} \in \mathcal{E}_{\text{ans}}, \forall \hat{e} \in C_\lambda(X_{\text{test}}) \mid D_{\text{cal}}\right) \geq 1-\alpha\right) \geq 1-\delta
\end{equation}
\end{proof}

In addition, to minimize the returned set size to make the prediction most useful to the users, we use the validation set to select the configuration $\lambda \in \Lambda_{valid}$ with the smallest average set size.

\begin{figure*}[h!]
        {%
            \includegraphics[width=.29\linewidth]{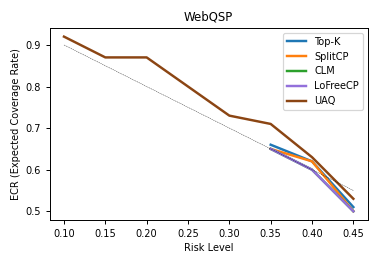}%
            \label{subfig:a}%
        }\hfill
        {%
            \includegraphics[width=.29\linewidth]{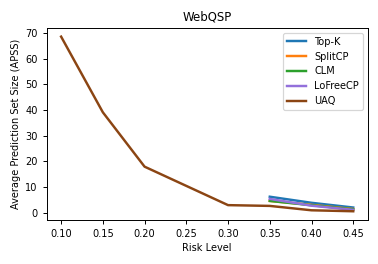}%
            \label{subfig:b}%
        }\hfill
        {%
            \includegraphics[width=.29\linewidth]{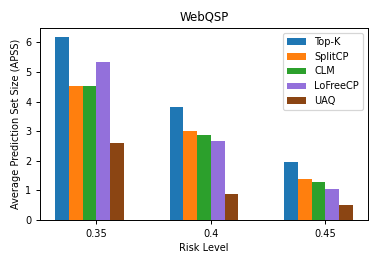}%
            \label{subfig:c}%
        }\\
        \subfloat[Risk Level vs. ECR]{%
            \includegraphics[width=.29\linewidth]{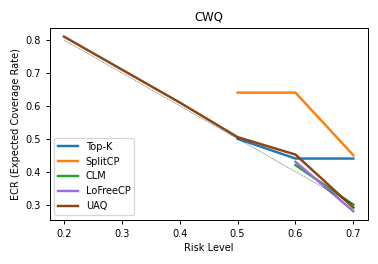}%
            \label{subfig:d}%
        }\hfill
        \subfloat[Risk Level vs. APSS]{%
            \includegraphics[width=.29\linewidth]{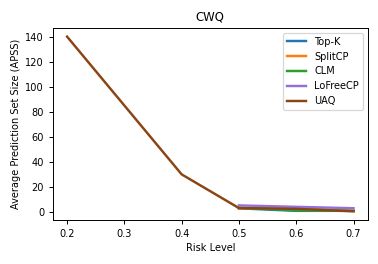}%
            \label{subfig:a}%
        }\hfill
        \subfloat[APSS]{%
            \includegraphics[width=.29\linewidth]{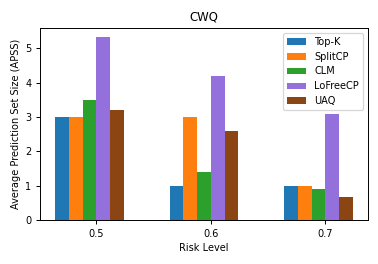}%
            \label{subfig:b}%
        }\\
        \caption{Uncertainty quantification results for \method and the baselines. } 
        \label{fig:res}
    \end{figure*}

\begin{table}[t]
    \centering
    \small 
    \setlength\tabcolsep{2.5pt}
    \caption{Statistics of KGQA datasets.}
    \vskip -0.75ex
    \label{tab:dataset}
    \begin{tabular}{@{}c|ccccc@{}}
        \toprule
        & & & \multicolumn{3}{c}{Extracted KG Subgraphs }\\
        
        Datasets & \#Train & \#Test & $\substack{Max \\\#hops}$ &  $\substack{Avg \\ \#Nodes}$ & $\substack{Avg \\ \#Edges}$ \\ \midrule
        WebQSP & 2,826 & 1,628 & 2 & 1,374 & 2,909 \\
        CWQ & 27,639 & 3,531 & 4 & 1,256 & 2,615 \\
        \bottomrule
    \end{tabular}
\end{table}

\section{Experiment}
\label{section:experiment}
In this section, we conduct extensive experiments on two widely used multi-hop KGQA datasets~\cite{luo2024rog, sun2023thinkongraph, complexKBQA}. Our experiments are designed to rigorously evaluate the uncertainty quantification performance of \method, employing standard UQ metrics to assess its effectiveness against state-of-the-art baselines. 

\subsection{Datasets}
We evaluate \method with two widely used benchmark dataset for KGQA: WebQuestionSP (WebQSP)~\cite{yih2016value} and Complex WebQuestions (CWQ)~\cite{talmor2018web}. WebQSP contains up to 2 hop questions, and CWQ contains up to 4 hop questions. Additionally, Freebase~\cite{freebase} is used as the underlying knowledge graph for both datasets. The dataset statistics are shown in Table \ref{tab:dataset}. Note that for calibration, we use the training partition. 

\subsection{Evaluation Metrics}
Compared to traditional KGQA tasks that typically rely on evaluation metrics such as hits@1~\cite{luo2024rog}, our approach focuses on measuring the uncertainty quantification of the methods. To this end, we use 
empirical coverage rate and average prediction set size as the key metrics to  
measure the accuracy and efficiency of the models, respectively~\cite{angelopoulos2021gentle}. We aim to provide insights into how well the various methods balance 
accuracy with the efficiency of its predictions. The details of these metrics are the following: 

\begin{itemize}

\item \textbf{ECR (Empirical Coverage Rate)}: 
This measures how well the uncertainty quantification model satisfies the error rate.  
For knowledge graphs, we consider the accuracy of the prediction set.

\item 
\textbf{APSS (Average Prediction Set Size)}: 

This  
evaluates the effectiveness of the uncertainty quantification model, with smaller average prediction set sizes indicating greater efficiency in selecting  
the most likely answers. 

\end{itemize}

\subsection{Baselines}
We compare \method with the following uncertainty quantification baselines.
Because there has been no existing work considering uncertainty quantification
in the KG-LLM paradigm, we include the previously proposed existing LLM-based uncertainty quantification methods as baselines.
\begin{itemize}
    \item \textbf{Top-K}: Non-CP method without coverage guarantee. It includes responses with the $K$ highest probabilities for each question in the test set~\cite{huang2023conformal}. 
    \item \textbf{Standard Split Conformal Prediction (SplitCP}). Standard conformal prediction~\cite{shafer2008tutorial} where we follow the framework outlined in previous work for its application on language models~\cite{ye2024benchmarking}.
    \item \textbf{Conformal Language Modeling (CLM)} A logit-based CP method that utilizes the general risk control framework directly on the output of the large language models~\cite{quach2023conformal}.
    \item \textbf{Logit-free Conformal Prediction for LLMs (LoFreeCP)}. 
     A 
     state-of-the-art CP method that designs specific techniques for handling low-frequency events in language models~\cite{su2024api}.
    
\end{itemize}

\subsection{Implementation Details}
For the implementation, we set $\delta$ to be 0.05, and use Llama3-8b as our backbone large language model. For the encoder $g$, we use the SentenceTransformer model and pre-train it on our training data. 
Other implementation details are further documented in the appendix. 

\subsection{Primary Results}

The experimental results are presented in Figure~\ref{fig:res}. For clarity and conciseness, we provide the most relevant graphs in the main paper and include the detailed tabular results in the supplementary material. In Figure~\ref{fig:res}(a), we first plot the Empirical Coverage Rate (ECR) against the risk level. The region above the diagonal dashed line, represents the acceptable coverage rate given the specified risk tolerance level. As anticipated, all CP-based methods satisfy the error guarantee within this boundary. However, we observe that the baseline methods fail to achieve error rates below 0.35, as their generation components alone 
cannot produce results better than this threshold. In contrast, our proposed \method, 
which integrates both retrieval and generation, consistently achieves the desired error rates across a wider range of risk levels.

In Figure~\ref{fig:res}(b) (i.e., the second column), we plot the prediction set size against the risk level. As expected, with an increase in risk level, there is a corresponding decrease in prediction set size, reflecting the trade-off between risk and prediction confidence. This trend is observed as the model becomes less conservative at higher risk levels, thereby reducing the number of elements in the prediction set.

Finally, we compare the zoomed-in view of prediction set sizes across different methods in Figure~\ref{fig:res}(c). Despite the performance at the 0.6 level in CWQ, our proposed \method consistently outperforms or performs competitively with the baseline methods, demonstrating its ability to maintain high accuracy while reducing uncertainty in predictions. This advantage can be attributed to the expressive power of the traversed reasoning paths, which effectively guide the retrieval process, as well as the robust combination of retrieval and generation that enhances the precision of the prediction set.

\subsection{Case Study}
We demonstrate the real-life implications and usefulness of \method through a detailed case study. Specifically, we select an instance among the samples where existing KGQA methods produce mispredicted or incomplete answers and analyze how \method overcomes these challenges. 

As shown in Table \ref{tab:case}, the ground truth to the question \textit{What are the religions practiced in Indonesia?} includes four distinct answers, showcasing the diversity of religious practices in the country. However, directly prompting the question to the language models results in only a single answer—Islam—even when using the state-of-the-art KGQA model. This outcome poses a significant threat to fairness and may introduce bias, as it fails to capture the full spectrum of religious diversity in Indonesia. \method addresses this challenge by incorporating risk level control, which allows for a more comprehensive and balanced prediction.

The prediction set is constructed by setting the risk tolerance level to 0.2. While the resulting set includes several additional related answers—The Religion of Java, a reference to a popular book by anthropologist Clifford Geertz focused on the religion of Java (an island of Indonesia), and The Language of the Gods, an alias for Sanskrit, the ancient classical language of Hinduism—it successfully covers all four ground truth answers. This ensures a more accurate and inclusive representation of the answer to the given question, thereby reducing the risk of bias and enhancing the trustworthiness of the predictions.

\begin{table}[t]
\centering
\small 
\setlength\tabcolsep{1.5pt}
\caption{Case Study on Sample Question from WebQSP.}
\vskip -0.5ex
\begin{tabular}{@{}ll@{}}
\toprule
Question: & \textit{What are the religions practiced in Indonesia?} \\ \midrule
Ground Truth: & \textit{Catholicism}, \textit{Hinduism}, 
                       \textit{Protestantism}, \textit{Islam} 
                      \\ \toprule \\
Predicted Answers: & \\ \midrule
~~~RoG{\scriptsize ~\cite{luo2024rog}} & \textbf{Islam} \\\midrule
~~~\method           & \textbf{Catholicism}, \textbf{Hinduism},  \textbf{Protestantism}, \\ & The Religion of Java,
                       \textbf{Islam}, \\ & The Language of the Gods\\ 
 \bottomrule
\end{tabular}
\label{tab:case}
\vskip 1.2ex
\end{table}

\section{Ablation Studies}
\label{section:ablation}
We conduct ablation studies on \method to comprehensively evaluate its effectiveness and robustness. Specifically, we first examine how \method compares to state-of-the-art KGQA methods, as robust performance on standard KGQA tasks is critical in daily uses. Additionally, we investigate the effects of pre-training and different similarity measures on the model performance.

\begin{table}[h]
\caption{KGQA Results on WebQSP. }
\vskip -0.5ex
\label{table:rog}
\centering
\small 
\begin{tabular}{@{}llc@{}}
\toprule
\textbf{Category} & \textbf{Method} & \textbf{Hits@1} \\ \midrule
\multirow{3}{*}{\textit{Fine-tuned}} 
  & TIARA~\cite{shu2022tiara} & 75.2 \\
  & DeCAF~\cite{yu2023decaf} & 82.1 \\
  & RoG~\cite{luo2024rog} & \textbf{85.7} \\
   \midrule
\multirow{5}{*}{\textit{Prompting}} 
  & COT~\cite{wei2022chainofthought} & 39.1 \\ 
  & ToG~\cite{sun2023thinkongraph} & 57.4\\
  & KD-CoT~\cite{wang2023knowledgedriven} & 63.7 \\
  & StructGPT~\cite{structgpt} & 72.6 \\
  & \method-Top-1 & 66.8 \\ 
  & \method-Top-3 & \textbf{73.4} \\ \midrule
\end{tabular}
\vskip -1ex
\end{table}

\subsection{Comparing to KGQA Methods}
Here we compare our method to state of the art KGQA methods~\cite{luo2024rog} by selecting the top responses from \method given a fixed error rate $\alpha$. We select $\alpha=0.2$ for the purpose of testing. Our result is shown in Table \ref{table:rog}. Overall, the table 
highlights that our model attains performance on par with current state-of-the-art for the traditional benchmark KGQA tasks (that utilize Hits@1 for evaluation). This underscores the robust reasoning capabilities of the \method and the efficiency of the candidate retrieval process.

\begin{figure}[t!]
    \centering
    \begin{subfigure}[t]{0.5\columnwidth}
        \centering
        \includegraphics[height=1.2in]{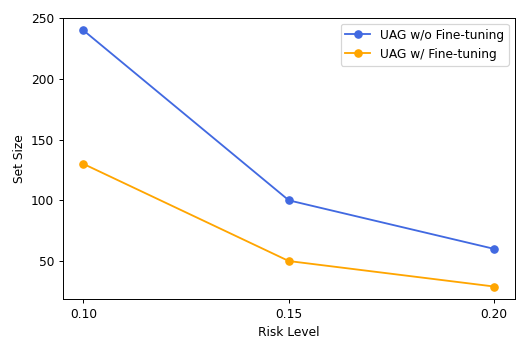}
        \caption{Pre-Training}
    \end{subfigure}%
    ~ 
    \begin{subfigure}[t]{0.5\columnwidth}
        \centering
        \includegraphics[height=1.2in]{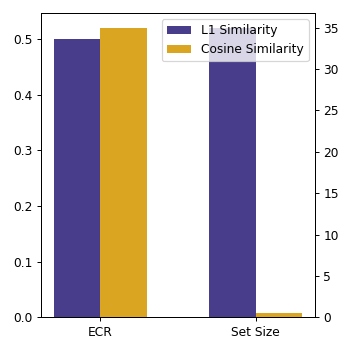}
        \caption{Similarity Measures}
    \end{subfigure}
    \caption{
    Ablation studies where (a) shows the benefits of pre-training the sentence transformer used in graph traversal and (b) shows a comparison when using L1 similarity as opposed to cosine similarity within \method. } 
    \label{fig:ab}
\end{figure}

\subsection{Effects of LM Pre-Training}

As shown in Figure \ref{fig:ab}(a), pre-training the sentence transformers (specifically for graph traversal) significantly reduces the size of the prediction set. This improvement is likely because fine-tuning at this stage enhances the quality of the initial traversal on the knowledge graph, leading to more accurate retrieval of answers. This process is analogous to better indexing in the context of Retrieval-Augmented Generation (RAG). This finding aligns with broader trends observed in graph-based question answering systems. However, when we pre-trained the large language model backbone, we did not observe a substantial improvement. This outcome aligns with previous findings suggesting that fine-tuning large language models can increase model uncertainty~\cite{ye2024benchmarking}. 

\subsection{Alternative Similarity Measures}

We also experimented with alternative methods for measuring textual similarity, specifically $S_1$ as defined in Eq. \eqref{eq:visiting}. As shown in Figure \ref{fig:ab}(b), we found that using cosine similarity significantly improves prediction efficiency while maintaining the user-defined error rate compared to L1 similarity. This improvement can be attributed to cosine similarity's focus on the angle between vectors, which makes it less sensitive to variations in vector magnitudes. This is particularly helpful in high-dimensional spaces, as it allows cosine similarity to more effectively capture semantic relationships between text representations.

\subsection{Ablation Studies Discussion}

Overall, \method demonstrates strong performance compared to existing KGQA methods even though it was not specifically designed for the traditional setting. Through the ablation studies, we observed that while \method benefits significantly from fine-tuning sentence transformers for graph traversal, the pre-training of large language models does not yield substantial gains, which we leave further investigation on finding improvements with fine-tuning LLMs as one future work in this direction. Lastly, through a case study we were able to more deeply understand the benefits of the proposed  trustworthy knowledge graph reasoning framework.

\section{Related Work}
\label{section:related}

\paragraph{Knowledge Graph Question Answering.}
Knowledge graph-based question answering (KGQA) has been a pivotal research topic in natural language processing due to its potential to leverage structured information for precise and interpretable answers. Traditional embedding-based methods transform entities and relations into a continuous embedding space and reason over these embeddings to produce answers~\cite{He_2021, skg, wang2021kepler}. Recently, the advent of large language models (LLMs) has shifted attention towards \textit{retrieval-augmented generation}, where the reasoning capabilities of LLMs are enhanced by retrieving relevant information from knowledge graphs. By retrieving structured data from the knowledge graph, LLMs can generate more accurate and contextually relevant answers. For instance, Luo et al. jointly fine-tuned a language model to generate both a relation plan and an answer by utilizing the retrieved reasoning paths~\cite{luo2024rog}. Similarly, Sun et al. and Wang et al. introduced novel LLM-guided graph traversal methods, enabling more efficient and effective navigation during the retrieval process~\cite{sun2023thinkongraph, wang2024knowledge}.

\paragraph{LLM Uncertainty Quantification.}
Uncertainty quantification has emerged as a critical research area for developing trustworthy AI systems, especially in contexts where decision-making involves significant risks. Recently, growing attention has been directed towards modeling uncertainty in large language models (LLMs). Conformal Language Models (CLM) extended the general risk control framework, allowing for the control of error rates in text generation while providing a more fine-grained level of confidence in the generated content~\cite{quach2023conformal}. LofreeCP further extended conformal prediction to black-box language models by approximating the non-conformity score with repeated prompting, thereby improving the reliability of LLM outputs~\cite{su2024api}. Additionally, researchers have proposed novel approaches to quantify uncertainty in retrieval-augmented generation by calibrating uncertainty across both the retrieval and generation processes with parameter tuning~\cite{rouzrokh2024conflare, li2024traq}.

\section{Conclusion}
\label{section:conclusion}

In this paper, we tackle the challenge of uncertainty quantification in knowledge graph question answering by integrating conformal prediction with KG-LLM models. Our architecture leverages the Learn Then Test (LTT) framework for multi-step calibration, delivering reliable results that meet pre-defined error rates while maintaining practical prediction set sizes. Extensive experiments show our method's effectiveness in balancing accuracy and uncertainty, making it suitable for real-world applications, especially those related to high stakes scenarios. 

While our focus is on knowledge graphs, this approach can be extend to open-domain QA, although challenges remain due to the lack of structured graph properties in such settings. Future work can explore these adaptations and ensure robust uncertainty quantification across various tasks.


\newpage

\bibliography{aaai25}

\appendix
\section{Appendix}

This technical appendix provides additional details and insights to complement the main paper. The rest of the appendix is organized as follows: Section A1 presents the notations used throughout the paper. Section A2 presents the pseudocode for \method. Section A3 includes the hardware specs and the hyper-parameter settings. Section A4 includes the quantitative experiment results of \method. Finally, Section A5 includes the Python implementation of \method.

\section{Notations}
We present the commonly used notations throughout the paper and this technical supplementary material in table \ref{tab:notations}.
\begin{table}[h!]
\centering 
\footnotesize 
\caption{List of Notations}
\begin{tabular}{cl}
\toprule
\textbf{Symbol} & \textbf{Description} \\ 
\midrule
$\mathcal{G}$  & Knowledge graph \\ 
$\mathcal{E}$            & Set of entities in the graph \\ 
$\mathcal{R}$            & Set of relations in the graph \\ 
$X_{test}$     & Input features for the test set \\ 
$D_{\text{cal}}$ & Calibration dataset \\
$Y_{test}$     & Ground-truth answers for the test set \\ 
$C_{\lambda}(X_{test})$ & Prediction set for $X_{test}$ under configuration $\lambda$\\ 
$\alpha$       & User-specified error rate for predictions \\ 
$\Lambda$ & Error rate configuration \\
$\delta$       & Confidence level for the error rate \\ 
$S$            & Non-conformal score function \\ 
$P$            & Probability measure \\ 
$\mathcal{P}$ & Set of retrieved reasoning paths\\
$\Phi$         & LLM generation function \\ 
$q_{S,D_{cal}}^{\alpha}$ & Quantile of the non-conformal scores \\ 

\bottomrule
\end{tabular}
\label{tab:notations}
\vspace{-2ex}
\end{table}

\begin{algorithm}[H]
\caption{\method Framework}
\label{alg:method}
\small  
\begin{algorithmic}[1] 
\Require Query $Q$, Question Entity $q$, Knowledge Graph $\mathcal{G}$, Error Rate $\alpha$, Calibrated Scores $q_{a_1}^{S_1, D_{cal}}, q_{a_2}^{S_1, D_{cal}}, q_{a_llm}^{S_1, D_{cal}}$
\Ensure Final Answer Set $\mathcal{A}$
\State Initialize Candidate Set $\mathcal{C} \gets \emptyset$
\State Initialize Search Queue $\mathcal{Q} \gets \{(q, Q)\}$

\While{$\mathcal{Q} \neq \emptyset$}
    \State Pop $(v, Q)$ from $\mathcal{Q}$
    \For{each $s \in \mathcal{N}(v)$} \Comment{Retrieve Candidate Path}
        \If{$S_1\Bigl(Q||(||_{i=0}^{j-1} r_i), r_j\Bigr) < q_{\alpha_1}^{S_1, \mathcal{D}_{\text{cal}}}$}\Comment{Eq. (4)}
        \State Add $s$ to Search Queue $\mathcal{Q}$
        \EndIf
    \EndFor

    \For{each $s \in \mathcal{N}(v)$} \Comment{Retrieve Candidate}
        \If{$S_1\Bigl(Q, ||_{i=0}^{j} r_i\Bigr) < q_{\alpha_2}^{S_1, \mathcal{D}_{\text{cal}}}$} \Comment{Eq. (5)}
            \State Add $s$ to Candidate Set $\mathcal{C}$
        \EndIf
    \EndFor
\EndWhile

\For{each $a \in \mathcal{C}$}
    \If{$S_1(a, \Phi(\mathcal{P})) < q_{\alpha_{\text{llm}}}^{S_1, \mathcal{D}_{\text{cal}}}$} \Comment{Eq. (6)}
        \State Add $a$ to Final Answer Set $\mathcal{A}$
    \EndIf
\EndFor

\State \Return $\mathcal{A}$

\end{algorithmic}
\end{algorithm}

\section{Pseudo-code}
We present the pseudo-code of \method in Algorithm \ref{alg:method}. We follow the notation and equations used throughout our paper. The algorithm begins by initializing an empty Candidate Set $\mathcal{C}$ to store potential answers and a Search Queue $\mathcal{Q}$ for the BFS(breadth-first-search). The BFS process is guided by the calibarated scores which serve as the threshold for adding neighboring nodes to $\mathcal{Q}$. 

Once all candidate nodes are collected, the algorithm filters them further by comparing the candidates with the llm generated answers $\Phi(\mathcal{P})$, ensuring that only those satisfying the condition in Equation (6) are included in the final answer set $\mathcal{A}$. Finally, the algorithm returns the set $\mathcal{A}$ as the output.

\section{Implementation Details}
\subsection{Computing Environment and Resources}
All of our experiments are conducted on a machine running Ubuntu 22.04.4 LTS with one GeForce RTX 4090 (24 GB VRAM) GPU and Intel i9 CPU with 64 GB RAM.  

\subsection{SentenceTransformers}
\paragraph{Model.} In Eqs. (4)-(6), 
\method's $S_1$ calculation requires first mapping the textual data into the embedding space. To this end, we employ SBERT as the encoder. Specifically, we use \texttt{all-MiniLM-L6-v2} model as the backbone model for pre-training. 

\paragraph{Fine-Tuning.} We leveraged fine-tuning to adapt the model to the dataset space. To ensure consistency with the traversal process, we fine-tuned the backbone SBERT using the \texttt{CosineSimilarityLoss}\footnote{https://sbert.net/docs/package\_reference/sentence\_transformer\\/losses.html\#cosinesimilarityloss}, where the input pairs are $\Bigl(Q||(||_{i=0}^{j-1} r_i), r_j\Bigr)$.

\paragraph{Hyper-parameters.} 
The key hyper-parameters for SBERT fine-tuning are the following:  \\
{\footnotesize 
\noindent\texttt{batch\_size: 32}\\
\texttt{scale: 20.0}\\
\texttt{similarity\_fct: 'cos\_sim'}\\
\texttt{epochs: 40}\\
\texttt{evaluation\_steps: 0}\\
\texttt{evaluator: NoneType}\\
\texttt{max\_grad\_norm: 1}\\
\texttt{optimizer\_class: AdamW}\\
\texttt{lr: 2e-05}\\
\texttt{scheduler: WarmupLinear}\\
\texttt{steps\_per\_epoch: null}\\
\texttt{warmup\_steps: 79636}\\
\texttt{weight\_decay: 0.01}
}

\subsection{\method}
\paragraph{Hyper-parameters.}For \method, the main hyper-parameter is $h$, which controls the search space of the parameters, and $\delta$, which controls the inner probability of the optimization target. In our experiments, we set $h$ to be different for the searc of $\alpha_1, \alpha_2$, and $\alpha_3$ to speed up the process. We set $h_1 = 0.3$, $h_2=0.3$, and $h_3=0.1$. In reality, smaller value would result in more fine-grained optimization. For $\delta$, we set the value to be 0.05.

Another critical consideration is the choice of Family-Wise Error Rate (FWER) control algorithm. Traditional methods, such as the Bonferroni correction, yield valid but often conservative prediction sets, leading to excessively broad predictions. To mitigate this, we adopted a heuristic approach where larger p-values are associated with better prediction coverage. This strategy allows us to effectively manage the search space frontier without significantly lowering the threshold (e.g., $\frac{\delta}{n}$). Specifically, by selecting the top p-value, we can optimize the size of the prediction set. In our experiments, we opted to select the top p-value, corresponding to setting $n=1$.

\paragraph{LLM.} The LLM backbone that we chose to use is \texttt{Llama-3-8b-instruct}. We used the recommended instruction prompt\footnote{https://llama.meta.com/docs/model-cards-and-prompt-formats/meta-llama-3/}. The sample prompt that we used is presented in Table~\ref{tab:case}. For generation, we use the following configurations with the LlamaForCausalLM and AutoTokenizer from Huggingface's Transformer class:\\

\noindent\texttt{temperature: 1}\\
\texttt{max\_new\_tokens: 10.0}\\
\texttt{num\_return\_sequence: 1}\\
\texttt{output\_scores: True}\\
\texttt{output\_logits: True}\\

For generation, we used $\text{batch\_size} = 1$ due to computational constraints. 

\subsection{Baselines}
\paragraph{Implementation.} For the WebQSP dataset, we take the experiment results from previous work~\cite{su2024api}. For CWQ, we implement the baselines using Python following their respective papers. We follow the same partition used in \method where the training partition is used for calibration.

\begin{table}[t]
\centering
\small
\caption{Prompt Examples.}
\label{tab:case}
\begin{tabular}{@{}p{0.25\linewidth}p{0.7\linewidth}@{}}
\toprule
\textbf{\method} & Based on the following reasoning paths: \{reasoning paths\}, \{question\}?  \\
\midrule 
\textbf{Example} & Based on the following reasoning paths: hinduism $\rightarrow$ \textit{practiced\_at\_location} $\rightarrow$ Indonesia. What are the religions practiced in Indonesia? \\
\midrule
\textbf{Answer}  & Hinduism \\
\bottomrule
\end{tabular}
\end{table}

\begin{table*}[t]
\centering
\caption{Experiment Results.}
\footnotesize
\label{tab:results}
\begin{tabular}{@{}ccccccccccccc@{}}
\toprule
$\alpha$ & \multicolumn{4}{c|}{0.1} & \multicolumn{2}{c}{0.15} &\multicolumn{2}{c|}{0.2}& \multicolumn{2}{c}{0.2}& \multicolumn{2}{c}{0.4}\\
  & \multicolumn{4}{c|}{WebQSP} & \multicolumn{2}{c}{WebQSP} & \multicolumn{2}{c|}{CWQ} & \multicolumn{2}{c}{WebQSP} & \multicolumn{2}{c}{CWQ} \\
 Metrics & \multicolumn{2}{c}{ECR} & \multicolumn{2}{c|}{SS} & ECR & SS & ECR & \multicolumn{1}{c|}{SS} & ECR & \multicolumn{1}{c}{SS} & ECR & SS \\ \midrule
Top K & - & - & - & - & - & - & - & - & - & - & - & - \\
Split CP & - & - & - & - & - & - & - & - & - & - & - & -  \\
CLM & - & - & - & - & - & - & - & - & - & - & - & - \\
LoFreeCP & - & - & - & - & - & - & - & - & - & - & - & - \\
$\method$ & \multicolumn{2}{c}{92.3} & \multicolumn{2}{c}{68.6} & 87.2 & 39.1 & 81.4 & 140 & 83.1 & 17.9 & 61.0 & 30.1 \\ \midrule
 $\alpha$ & \multicolumn{2}{c}{0.35} & \multicolumn{2}{c|}{0.5} & \multicolumn{2}{c}{0.4} & \multicolumn{2}{c|}{0.6}& \multicolumn{2}{c}{0.45} & \multicolumn{2}{c}{0.7} \\
  & \multicolumn{2}{c}{WebQSP} & \multicolumn{2}{c|}{CWQ} & \multicolumn{2}{c}{WebQSP} & \multicolumn{2}{c|}{CWQ} & \multicolumn{2}{c}{WebQSP} & \multicolumn{2}{c}{CWQ} \\
 Metrics & ECR & SS & ECR & \multicolumn{1}{c|}{SS} & ECR & SS & ECR & \multicolumn{1}{c|}{SS} & ECR & SS & ECR & SS \\ \midrule
Top K & 66.4 & 6.2 & 50.2 & 3.0 & 61.6 & 3.8 & 43.6 & 1.0 & 50.6 & 2.0 & 43.6 & 1.0 \\
Split CP & 65.3 & 4.5 & 64.1 & 3.0 & 61.6 & 3.0 & 64.1 & 3.0 & 50.2 & 1.4 & 44.9 & 1.0 \\
CLM & 65.3 & 4.5 & 42.4 & 3.5 & 60.5 & 2.8 & 42.4 & 1.4 & 50.1 & 1.3 & 30.2 & 0.9 \\
LoFreeCP & 65.1 & 5.3 & 43.3 & 5.3 & 60.0 & 2.7 & 43.3 & 4.2 & 50.3 & 1.0 & 28.4 & 3.0 \\ 
$\method$ & 70.1 & 2.6 & 50.5 & 3.2 & 63.1 & 0.9 & 45.2 & 2.6 & 52.9 & 0.5 & 29.1 & 0.7
\end{tabular}
\label{tab:webqsp}
\end{table*}

\section{Experiment Results}
In the main paper, we include the graphed results for \method for a cleaner
presentation. Here we provide the full result in Table \ref{tab:results}. We mark the entry ``-'' if the corresponding method is not able to produce a valid prediction set given the user-defined risk level $\alpha$. 

\section{Code}
Our implementation of \method is available for download in the google drive\footnote{https://drive.google.com/file/d/1vm2S1FUTSzMIAoI5R6-NX2JCHmIxkgxK/view?usp=sharing} for anonymous sharing. We will make the code public on GitHub upon publication. 

The code is structured as follows: 
\paragraph{qa.py} This is the entry point to the program. It will download the dataset if it is not present in the directory. Example: 
\texttt{python3 qa.py --epsilon 0.2 --llm\_ranker True --pre\_trained False --dataset\_name webqsp --calibrate\_rog True}. 

\paragraph{gen\_rule\_path.} Folder. This is the place where the planning rules are stored for retrieval. 

\paragraph{transformer\_models.} Folder that stores the weight of the fine-tuned SentenceTransformer model. The weights are loaded automatically when user executes the main program. 

\paragraph{model/cp.py}  Our implementation of \method. It includes some configurations (e.g. LLM based traversal strategies) that we experimented and they are safe to ignore. 

\paragraph{model/LLMs} Folder. Includes configurations that are safe to ignore. It is included in this version of release as qa.py explicitly refers to some of the classes when checking arguments. It will be cleaned up before the official release.


\end{document}